\documentclass[letterpaper, 10 pt, conference]{ieeetran}  

\IEEEoverridecommandlockouts                              

\usepackage{subcaption}
\usepackage{enumitem}
\usepackage{graphicx} 
\usepackage{comment}
\usepackage{overpic}
\usepackage{cite}
\usepackage{algorithm}
\usepackage{algorithmic}
\usepackage{amsthm}
\usepackage{arydshln}
\newtheorem{theorem}{Theorem}

\usepackage{amsmath} 
\usepackage{amssymb}
\usepackage{bm}
\usepackage{balance}
\usepackage{hyperref}
\hypersetup{
    colorlinks=true,
    linkcolor=black,
    filecolor=blue,      
    urlcolor=black,
    pdftitle={Constrained Potential Surgery},
    }

\usepackage{xcolor}
\floatstyle{plaintop}
\restylefloat{table}
\usepackage[tableposition=top]{caption}
\usepackage{soul}

\newcommand{\T}{{\sf T}}

\title{\LARGE \bf
Game-Theoretic Control with Constrained Potential Surgery
}

\author{Zhiyuan Zhang$^{1}$ and Panagiotis Tsiotras$^{2}$
\thanks{This work is sponsored by ONR awards N00014-23-2308 and N00014-23-2353 and NSF award IIS-2008686}
\thanks{$^{1}$ School of Aerospace Engineering, Institute for Robotics and Intelligent Machines, Georgia Institute of Technology, Atlanta, GA 30332, USA, Email:
        {\tt\small zzhang615@gatech.edu}}%
\thanks{$^{2}$ School of Aerospace Engineering, Institute for Robotics and Intelligent Machines, Georgia Institute of Technology, Atlanta, GA 30332, USA, Email:
        {\tt\small tsiotras@gatech.edu}}%
}

\renewcommand{\baselinestretch}{0.98}

\begin{document}
\maketitle
\thispagestyle{empty}
\pagestyle{empty}

\begin{abstract}
Constrained general-sum dynamic games are a popular formulation for highly interactive multi-agent planning problems.
In recent years, Generalized Nash Equilibrium (GNE) solvers have achieved real-time performance for small dynamic games.
However, solution speed still remains a bottleneck, and controlling even a small number of agents (e.g., more than four) in a dynamic task remains elusive.
In addition, Newton solvers that focus on the first-order conditions are vulnerable to non-Nash saddle points, limiting the usefulness of the provided solution.
In this work, we propose a fast and versatile interior point solver for constrained dynamic games,
along with a computationally efficient second-order correction that increases the probability of converging to a local GNE solution in constrained dynamic games.
The performance of the proposed method is evaluated on numerical benchmarks and a physical experiment involving scaled race cars.

\end{abstract}

\section{Introduction}

\IEEEPARstart{C}{onstrained} general-sum dynamic games have emerged as a powerful formulation for modeling highly interactive multi-agent path-planning problems. 
In recent years, Newton solvers for Generalized Nash Equilibria (GNE) have advanced significantly, achieving real-time performance for small-scale games. 
However, solution speed remains a critical bottleneck, and experimental demonstrations are often restricted to four or fewer agents, typically involving slow-moving robots~\cite{ilqgame, algames,rd3g,gt_1,gt_2}.
In addition, standard Newton methods that solve for the first-order Karush-Kuhn-Tucker (KKT) conditions are inherently vulnerable to converging to non-Nash saddle points, severely limiting the practical usefulness of their solutions~\cite{algames, rd3g}.
To address these challenges, we propose Constrained Potential Surgery (CPS), a versatile interior-point solver designed for \textit{constrained} dynamic games. 

The primary contributions of this work are threefold:
\begin{itemize}[label={$\bullet$},leftmargin=*]

\item
\textbf{Inertia-Preserving Curvature Surgery:} 
We introduce a numerically efficient positive-definite correction to the symmetric primal block of the game Jacobian utilizing an $LDL^\top$ factorization.
This structural intervention destabilizes non-GNE saddle points, yielding a significant, simulation-verified improvement in the optimality rate of computed solutions.

\item
\textbf{High-Performance Solver Architecture:} We develop a highly optimized interior-point GNE solver that is more than 4X faster than the state-of-the-art Newton and active-set solvers, improving the computational bottleneck for online multi-agent planning.

\item
\textbf{Multi-Vehicle Physical Validation:} We validate the real-time capabilities of the proposed solver on a physical miniature autonomous racing platform, successfully controlling a fleet of six
vehicles executing complex adversarial maneuvers at receding-horizon rates.

\end{itemize}

\section{Related Work}

An extensive body of literature exists on the theory and application of dynamic games.
Rather than providing a comprehensive survey, this section focuses on relevant methods derived from solving the corresponding necessary conditions for optimality.
We restrict our attention to public-information games, where the dynamics and cost functions of all players are assumed to be common knowledge.
Furthermore, we assume that the underlying cost functions and system dynamics are $C^2$ continuous.
 For a more foundational and structured discussion on smooth dynamic games, readers are referred to \cite{smoothgame}.
 
Early approaches to continuous dynamic games often converted the underlying problem into a more tractable format.
 For instance, although the action space is inherently continuous, heuristic sampling can be utilized to approximate the continuous game as a finite matrix game \cite{bimatrix}.
 In this approach, dynamic constraints are implicitly satisfied by sampling the control variables and simulating the resulting state trajectories, while state constraints are enforced via rejection sampling.
 Although this methodology relies heavily on heuristics to construct a discrete action space that is both manageable and representative, the resulting Nash equilibrium of the sampled game can be highly effective in practice.
 Physical experiments using scaled autonomous vehicles have successfully demonstrated the real-time capabilities of this sampling-based approach~\cite{bimatrix}.
 
Another prominent strategy is Iterated Best Response (IBR) \cite{ibr}, which decouples the dynamic game into a sequence of single-agent optimal control problems.
 IBR iteratively optimizes the trajectory for one player while holding the control policies of all other players static.
 To address the slow convergence typical of standard IBR, Sensitivity-Enhanced Iterated Best Response (SE-IBR) \cite{seibr_1} accelerates the process by actively encouraging agents to explore interaction constraints.
 With careful implementation, SE-IBR has been successfully deployed in complex scenarios, such as a two-player game featuring a full-sized autonomous vehicle interacting with a simulated opponent~\cite{seibr}.
 
More recently, algorithms that directly solve for the Nash Equilibrium (NE) or Generalized Nash Equilibrium (GNE) of the continuous game have emerged~\cite{ilqgame,algames,rd3g, ilqgame_lagrange}.
 These direct solution methods generally fall into two broad categories summarized next.

\subsection{Dynamic Programming and DDP-Based Methods}

The first category of GNE solvers encompasses Differential Dynamic Programming (DDP)-based methods, such as minimax-DDP\cite{minimax_ddp} and iLQGame\cite{ilqgame}. 
These methods solve for each stage of the game sequentially and propagate the optimal policy at one stage to the cost function of the previous stage.
The formulation and solution used at each stage are the primary differentiators and determine their applicability.
For example, iLQGame approximates each state of the game as a general-sum Linear-Quadratic Game (LQGame), whose stationarity conditions can be computed analytically.
While these algorithms elegantly accommodate soft constraints through penalty formulations, they struggle to rigorously enforce hard constraints. 
Extensions that explicitly handle chance constraints or hard collision constraints exist, but they typically incur severe computational overhead, rendering them inadequate for real-time control \cite{ilqgame_lagrange}.

\subsection{Newton-Based GNE Solvers}

The second category comprises Newton-based solvers, which formulate the game as a root-finding problem over the KKT necessary conditions and handle constraints rigorously via Lagrange multipliers. 
Algorithms in this category, such as ALGAMES\cite{algames} and RD3G\cite{rd3g}, are significantly faster than their DDP-based counterparts. 
ALGAMES introduced the first multiple-shooting Newton method for dynamic games, while RD3G leverages an active-set method to further improve real-time performance. 
RD3G has demonstrated sufficient computational efficiency to support physical experiments involving two scaled autonomous cars racing at high speeds~\cite{rd3g}.

\subsection{Limitations and Structural Interventions}

Despite these advancements, two major drawbacks persist within the current state-of-the-art. 
First, because these methods rely exclusively on first-order necessary conditions, they do not verify the second-order sufficiency conditions via the game Hessian. 
For example, in iLQGames, the solution at each stage relies on the closed-form stationary point of a local quadratic game approximation, while ALGAMES solves strictly for a KKT point. 
Consequently, these solvers cannot natively distinguish between a minimizer, a maximizer, and a saddle point.
This limitation is explicitly acknowledged in the existing literature~\cite{algames,rd3g, dldu_2}.

Second, despite substantial improvements in solution times, computational efficiency remains a fundamental challenge. 
Inherently, the decision space scales exponentially with the number of agents and the length of the receding horizon. 
Without utilizing a parameterization to reduce dimensionality or mean-field methods to reformulate the game, finding a Generalized Nash Equilibrium for games involving more than a few agents remains computationally prohibitive for online, receding-horizon execution.

For narrower classes of games, specialized analytical and numerical methods exist to address these shortcomings. 
To circumvent the computational bottleneck, \textit{potential game} formulations can convert the multi-agent dynamic game into a single-objective optimization problem, yielding massive computational speed improvements \cite{potential}. 
Furthermore, to address the tendency of first-order Newton methods to converge to local maximizers or non-Nash saddle points in \textit{zero-sum games}, 
 algorithms such as Local Symplectic Surgery (LSS)\cite{lss} and DND/SECOND\cite{dnd} introduce correction terms to the Newton descent dynamics. 
These corrections provably destabilize saddle-point attractors, ensuring convergence exclusively to local Nash Equilibria. 
While the specific structural properties these algorithms rely on do not universally exist in general-sum, multi-player games, their underlying methodology provides the theoretical inspiration for the Hessian correction mechanisms proposed in this work.

\section{Problem Formulation}

Denote $x^i_k, u^i_k$ as the state and control vectors of player $i$ at stage $k$, respectively. 
For notational simplicity, we use $x^i$ to denote the concatenated state of player $i$ across all stages, $x_k$ to denote the concatenated state of all players at stage $k$, and $x$ to denote the concatenated state of all players across all stages. 
We also adopt the common notation that $x^{-i}$ denotes the concatenated state of all players at all stages except agent $i$.
The same notation abbreviation applies to other variables.
We are interested in a general-sum, multi-stage, constrained dynamic game of the form
\begin{subequations}\label{eq:game}
\begin{align}
\min_{x^i,\,u^i} \quad
& J^i(x, u^i), \label{eq:game_obj}\\
\textrm{s.t.} \quad 
& x^i_{k+1} = f(x^i_k, u^i_k), 
\quad k = 0, \dots, T-1, \label{eq:dyn_con}\\
& h(x, u) \le 0,  \label{eq:ineq_con}\\
\textrm{where  } & J^i(x, u^i)=
\sum_{k=0}^{T} J^i_k(x_k, u^i_k).
\end{align}
\end{subequations}
The feasible control set for player $i$ is defined as $\mathcal{U}^i(u^{-i}) := \{u^i | h(x, u^i, u^{-i}) \le 0, x^i_{k+1} = f(x^i_k, u^i_k) \}$. 
Let $\mathcal{N}^i(u^i)$ denote a neighborhood of $u^i$.
We are interested in finding the local Generalized Nash Equilibrium (GNE), that is, a set of controls $u^* = (u^{1*}, u^{2*}, \dots, u^{N*})$ such that
\begin{equation}
\begin{split}
\exists \mathcal{N}^i(u^{i*}), 
\quad J^i(x, u^{i*}, u^{-i*}) \le J^i(x, u^i, u^{-i*}),  \\
\quad \forall u^i \in \mathcal{U}^i(u^{-i*})\cap\mathcal{N}^i(u^{i*}).
\end{split}
\end{equation}

\section{Methodology}

\subsection{Necessary Conditions}

The Lagrangian of the dynamic game for player $i$ is\footnote{In this work, we solve for the variational GNE, where all players share the same dual multiplier ($\mu$) for the coupled inequality constraints. 
Although variational GNEs constitute a strict subset of all possible GNEs, this formulation is a standard convention in dynamic game solver literature~\cite{algames}
because it resolves the non-uniqueness of multipliers and ensures computational tractability~\cite{gnep}.}
\begin{align}\label{eq:lagrangian}
\mathcal{L}^i (x,u,\lambda^i, \mu^i) 
&= \sum_{k=0}^{T} J^i_k(x_k, u^i_k)
+ \mu^\top h(x, u)\nonumber \\ 
&\quad + \sum_{k=0}^{T-1} (\lambda^{i}_k)^\top \big[f(x_k^i, u_k^i) - x_{k+1}^i\big].
\end{align}

The first-order KKT necessary conditions for GNE are\cite{gnep}:
\begin{subequations}\label{eq:kkt}
\begin{align}
\nabla_{x^i}\mathcal{L}^i &= 0, \quad \nabla_{u^i}\mathcal{L}^i = 0, \quad \forall i \in [1..N],\label{eq:kkt_stationarity}\\
\mathcal{F}^i(x^i,u^i) &= 0, \quad \forall i \in [1..N],\label{eq:kkt_dynamics}\\
h(x, u) &\le 0, \label{eq:kkt_ineq}\\
\mu &\ge 0, \quad \mu^\top h(x,u) = 0,\label{eq:kkt_comp}
\end{align}
\end{subequations}
where (\ref{eq:kkt_stationarity}) are the stationarity conditions, (\ref{eq:kkt_dynamics}), (\ref{eq:kkt_ineq}) are the primal feasibility conditions, and (\ref{eq:kkt_comp}) are the dual feasibility and complementary slackness conditions.
In (\ref{eq:kkt_dynamics}), $\mathcal{F}^i$ stands for the stacked dynamics constraint for player $i$, as defined in (\ref{eq:dyn_con}).

\subsection{Performance Considerations}

%
%
Many standard algorithms such as~\cite{algames,ilqgame_barrier1,ilqgame_barrier2,rd3g}  often struggle to find a solution if they are not provided with a feasible initial guess, 
a task that is notoriously difficult to satisfy for tightly constrained multi-agent problems.
Inspired by modern interior point methods such as IPOPT~\cite{ipopt}, we introduce a slack variable $s$,
to achieve robustness against highly infeasible initializations,
and solve, instead, the modified set of conditions as shown below:
\begin{subequations}\label{eq:kkt_slack}
\begin{align}
\nabla_{x^i}\mathcal{L}^i &= 0, \quad \nabla_{u^i}\mathcal{L}^i = 0,\quad \forall i \in [1..N], \label{eq:kkt_stationarity2}\\
\mathcal{F}^i(x^i,u^i) &= 0,\quad \forall i \in [1..N], \label{eq:kkt_dynamics2}\\
h(x, u) + s&= 0, \label{eq:kkt_ineq2}\\
s &\ge 0,\\
\mu &\ge 0, \quad \mu^\top h(x,u) = \tau.\label{eq:kkt_comp2}
\end{align}
\end{subequations}
By introducing the slack variable $s$, the solver can satisfy the non-negativity conditions $s \ge 0$ and $\mu \ge 0$ at every iteration. Consequently, any intermediate constraint violations are localized entirely within the equality residuals (\ref{eq:kkt_dynamics2}) and (\ref{eq:kkt_ineq2}). 
Furthermore, we use a perturbed complementary slackness condition in (\ref{eq:kkt_comp2}). 
When solving for the strict complementarity condition $\mu^\top h = 0$, the Newton descent step involves solving for the condition $\mu^\top \Delta s + s^\top \Delta \mu = \mu^\top s$.
As the solver approaches the boundary of the feasible region, either $s \to 0$  or $\mu \to 0$.
If either approaches zero, the corresponding row in the KKT matrix approaches zero, making the matrix ill-conditioned or singular.
Using the perturbed complementary slackness avoids the numerical sensitivity associated with exact complementary slackness, thereby improving algorithmic stability~\cite{nocedal2006numerical}. 
The homotopy parameter $\tau > 0$ is gradually shrunk toward zero as the solver progresses.

Define the game residual as $\mathcal{R} = %
[\nabla_{x}\mathcal{L} ,%
\nabla_{u}\mathcal{L},%
\mathcal{F},%
h + s,%
\mu^\top h - \tau]$, where the stationarity residual ($\nabla_x \mathcal{L}, \nabla_u \mathcal{L}$),  and the dynamics constraint residual $\mathcal{F}$ are formed by stacking the respective residuals for all players.
Denote the search direction as $\Delta r = [\Delta x, \Delta u, \Delta\lambda, \Delta\mu]$\footnote{In (\ref{eq:newton}), $s$ can be eliminated with (\ref{eq:kkt_ineq2}) and (\ref{eq:kkt_comp2}), reducing the size of the linear program.}.
The Newton method finds the descent direction by solving
\begin{equation}\label{eq:newton}
\nabla\mathcal{R}^{(t)} \,\Delta r + \mathcal{R}^{(t)} = 0,
\end{equation}
where $\mathcal{R}^{(t)}$ is the residual at iteration $t$. 

\subsection{Inertia Adjustment}

Existing Newton solvers for dynamic games \cite{algames, rd3g} often take the solution to (\ref{eq:kkt}) as the game solution.
However, 
solving (\ref{eq:kkt}) yields only a stationary point,
and may not satisfy the sufficient conditions to guarantee a minimizer for all players.
When the game Jacobian $\nabla \mathcal{R}$ contains negative eigenvalues, $\Delta r$ can be an ascent direction along the corresponding eigenvectors of $\nabla \mathcal{R}$, attracting the solver to local maxima or saddle points.
To improve convergence to a local minimum for all players (a Generalized Nash Equilibrium), we must encourage the solver to take a strict descent step in the primal space and destabilize saddle points in the solver dynamics.
We propose to achieve this by applying a numerically efficient operation to the game Jacobian.

Let the values of the primal and dual variables at the current iteration be denoted by 
as $\bar{p} = [\bar{x},\bar{u}]$ and $\bar{d} = [\bar{\lambda}, \bar{\mu}]$, respectively.
Assume that $\bar{p}$ is feasible.  
Then, the Newton's problem in (\ref{eq:newton}) takes the form
\begin{equation}\label{eq:expanded_newton}
\begin{bmatrix} J_p(\bar{p},\bar{d}) & J_d^\top (\bar{p},\bar{d}) \\ 
J_d (\bar{p},\bar{d}) & 0 
\end{bmatrix} 
\begin{bmatrix} 
\Delta p \\ \Delta d 
\end{bmatrix} = 
\begin{bmatrix} 
-\nabla_p \mathcal{L}(\bar{p},\bar{d}) \\ 0 
\end{bmatrix}.
\end{equation}
We decompose $J_p$ into its symmetric and skew-symmetric components.
Denote the symmetric component of $J_p$ as $S = (J_p + J_p^\top)/2$, and denote its skew-symmetric component as $A = (J_p - J_p^\top)/2$.
Henceforth, we will drop the arguments $(\bar{p},\bar{d})$ to keep the notation simple, but the reader should be reminded that all derivatives are evaluated at the current iteration step.

We perform an $LDL^\T$ factorization on $S$ such that $S = L D L^\T$, where $L$ is a unit lower triangular matrix and $D$ is a diagonal matrix.
Let $|D|$ denote the matrix formed by taking the absolute value of each element in $D$, such that $|D|_{ii} = |D_{ii}|$.
We construct the modified (1,1) primal block of \eqref{eq:expanded_newton}  as $J'_p = S' + A$, where $S' = L |D| L^\T$ and
$J'_p$ replaces $J_p$ in \eqref{eq:expanded_newton}.
 
While one could theoretically compute the full eigendecomposition $S = V \Lambda V^\T$ and construct $V \Lambda V^\T$ to perfectly preserve the principal axes of the quadratic model, doing so would require expensive iterative algorithms.
The eigendecomposition requires approximately $9n^3$ floating-point operations (flops), while the $LDL^\T$ decomposition requires only $n^3/3$ flops\cite{matrix_computation}.
Furthermore, $LDL^\T$ factorization executes in deterministic time and provides a highly efficient, structure-preserving mechanism for inertia-controlling regularization, making it vastly more practical for online solver frameworks.

By Sylvester's Law of Inertia~\cite{matrix_computation}, the number of negative elements in $D$ is exactly equal to the number of negative eigenvalues in $S$.
Therefore, modifying $D$ to $|D|$ guarantees that $S'$ is strictly positive-definite and that $J'_p$ has eigenvalues with positive real part, actively pushing the descent dynamics away from saddle points and maxima.
Because the $LDL^\T$ factorization avoids iterative eigenvalue computation, it provides the required robust descent direction at a fraction of the computational cost, making it highly desirable for real-time optimal control implementations.

\subsection{Theoretical Analysis}

Let $Z$ be a matrix
whose columns form a basis for the null space of the constraint matrix $J_d$.
From (\ref{eq:expanded_newton}), it then follows that $J_d \Delta p = 0$.
Hence, any valid descent step can be written as $ \Delta p = Z \Delta p_N$ for some $\Delta p_N$.
Substituting the last expression back to \eqref{eq:expanded_newton} yields
\begin{equation}
J_p Z \Delta p_N + J_d^\top \Delta d = -\nabla_p \mathcal{L}.
\end{equation}
Pre-multiplying the previous equation by $Z^\top$ yields
\begin{equation}  \label{eq:reduced}
Z^\top J_p Z \Delta p_N = -Z^\top \nabla_p \mathcal{L}.
\end{equation}
%
%
We may therefore consider the reduced problem in \eqref{eq:reduced}
where $J_p$ and $\nabla_p \mathcal{L}$ are projected onto the constraint null space via the basis matrix $Z$.
Henceforth, we denote the projection onto the constraint null space with a subscript $N$, and rewrite \eqref{eq:reduced}
as
\begin{equation}\label{eq:gn_def}
J_{p,N}\Delta p_N = -\nabla_{p,N} \mathcal{L} = - g_N,
\end{equation}
where $J_{p,N} = Z^\top J_p Z$ and  the reduced pseudo-gradient is $g_N = Z^\top \nabla_p \mathcal{L}$.
Our proposed method finds the descent step with the modified game Jacobian $J_{p,N}'$
\begin{equation}\label{eq:update}
J_{p,N}'\Delta p_N = - g_N.
\end{equation}

\begin{theorem}[Reduced Primal Descent for Game Dynamics] \label{thm:descent}
The reduced Newton step $\Delta p_N$ in \eqref{eq:update}
satisfying 
$J_{p,N}' \Delta p_N = -g_N$
is a strict descent direction with respect to the reduced pseudo-gradient $g_N$, provided that $g_N \neq 0$ and $S = (J_p + J_p^\top)/2$ is non-singular.
\end{theorem}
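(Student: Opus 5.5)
The approach is to show that $g_N^\top \Delta p_N < 0$ by writing $\Delta p_N = -(J'_{p,N})^{-1} g_N$ and using the fact that the symmetric part of $J'_{p,N}$ is positive definite. Here the reduced modified Jacobian is $J'_{p,N} = Z^\top J'_p Z = Z^\top S' Z + Z^\top A Z$, where $S' = L|D|L^\T$ and $A$ is the skew-symmetric part of $J_p$.

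\textbf{Step 1: $S'$ is positive definite.} Since $S$ is non-singular, every diagonal entry of $D$ in $S = LDL^\T$ is nonzero, so $|D|$ has strictly positive diagonal. $L$ is unit lower triangular and hence invertible. For any $v \neq 0$ put $w = L^\T v \neq 0$. Then $v^\top S' v = w^\top |D| w > 0$.

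\textbf{Step 2: pass to the null space.} $Z$ has linearly independent columns, so $Z y \neq 0$ for every $y \neq 0$. Hence $Z^\top S' Z$ is symmetric positive definite. The matrix $Z^\top A Z$ is skew-symmetric, because $(Z^\top A Z)^\top = Z^\top A^\top Z = -Z^\top A Z$, so $y^\top Z^\top A Z y = 0$ for all $y$. Together these give
\begin{equation*}
y^\top J'_{p,N} y = y^\top Z^\top S' Z y > 0 \quad \forall y \neq 0.
\end{equation*}
So $J'_{p,N}$ is positive definite in the non-symmetric sense. In particular it is non-singular: if $J'_{p,N} y = 0$ with $y \neq 0$, then $y^\top J'_{p,N} y = 0$, a contradiction. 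This makes $\Delta p_N$ in \eqref{eq:update} well defined and unique.

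\textbf{Step 3: the descent inequality.} Since $g_N \neq 0$ and $J'_{p,N}$ is non-singular, we have $\Delta p_N \neq 0$. Pairing \eqref{eq:update} with $\Delta p_N$ gives
\begin{equation*}
g_N^\top \Delta p_N = -\Delta p_N^\top J'_{p,N} \Delta p_N = -\Delta p_N^\top Z^\top S' Z \Delta p_N < 0,
\end{equation*}
which is the strict descent property with respect to the reduced pseudo-gradient. The corresponding full primal step $\Delta p = Z\Delta p_N$ then satisfies $\nabla_p\mathcal{L}^\top \Delta p = g_N^\top \Delta p_N < 0$.

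The main obstacle is making sure the positivity survives both the $LDL^\T$ modification and the projection. Step 1 needs the nonsingularity hypothesis on $S$: a zero pivot would give only positive semidefiniteness. Step 2 needs $Z$ to have full column rank, which holds because its columns form a basis. It is also worth checking that the skew-symmetric part is untouched by the modification, since $J'_p = S' + A$ by construction; this is what makes the quadratic form of $J'_{p,N}$ depend only on $S'$. If the $LDL^\T$ is computed with symmetric pivoting ($P^\top S P = LDL^\T$), the same argument goes through with $L$ replaced by $PL$.
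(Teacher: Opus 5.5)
Your proof is correct and follows the same route as the paper: positive definiteness of $S' = L|D|L^\top$ from the nonsingularity of $D$, vanishing of the skew-symmetric part $A_N$ in the quadratic form, and then $g_N^\top \Delta p_N = -\Delta p_N^\top S'_N \Delta p_N < 0$. You also make explicit three points the paper leaves implicit: that full column rank of $Z$ is what carries positive definiteness from $S'$ to $S'_N$, that $J'_{p,N}$ is therefore nonsingular so the step is well defined, and that $g_N \neq 0$ is exactly what forces $\Delta p_N \neq 0$ and hence the strict inequality.
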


\begin{proof}
Since $S$ is non-singular, $D$ is also non-singular, and hence all diagonal entries of $|D|$ are strictly positive. 
Consequently, the matrix $S^{\prime} = L|D|L^{\top}$ is strictly positive-definite.
To evaluate the descent property of the reduced step $\Delta p_N$, we evaluate its inner product with the reduced pseudo-gradient $g_N= -J^{\prime}_{p,N} \Delta p_N$ 
from \eqref{eq:gn_def} to obtain
\begin{equation} \label{eq:descent_inner}
g_N^{\top} \Delta p_N = -\Delta p_N^{\top} J^{\prime}_{p,N} \Delta p_N.
\end{equation}
Recall that the game Jacobian $J_p$ is decomposed into the symmetric $S$ 
and skew-symmetric $A$.
The projected Jacobian $ J_{p,N} = Z^\top J_p Z$ is similarly decomposed to its symmetric $S_N = Z^\top S Z$ and the skew-symmetric $A_N = Z^\top A Z$ parts.
The projected and modified Jacobian is $J_{p,N}^{\prime} = S^{\prime}_N + A_N$. 
Substituting this into \eqref{eq:descent_inner} yields
\begin{equation}\label{eq:12}
\begin{split}
g_N^{\top} \Delta p_N &= -\Delta p_N^{\top} S^{\prime}_N \Delta p_N - \Delta p_N^{\top} A_N \Delta p_N \\
& -\Delta p_N^{\top} S^{\prime}_N \Delta p_N < 0,
\end{split}
\end{equation}
since ${S}^{\prime}_N = Z^\top S' Z$ is positive definite.
Finally, inequality~\eqref{eq:12} guarantees strict descent.
\end{proof}

The update step \eqref{eq:update} leads to a sequence of candidate solutions $p_N^{(k+1)} = p_N^{(k)} - \alpha (J_{p,N}')^{-1}g_N(p_N^{(k)})$, where $\alpha$ is the step size.
By treating $\alpha$ as a discrete time increment $\Delta t$, and taking the limit as $\Delta t \to 0$, the sequence converges to the continuous solver dynamics
\begin{equation}\label{eq:solver_dyn}
\dot{p}_N = -(J'_{p,N})^{-1}g_N(p_N).
\end{equation}
%
%
To analyze local stability, we linearize this system around an equilibrium point $p_N^*$.
Applying the product rule to evaluate the Jacobian of the dynamics at $p_N^*$, the term involving the derivative of $(J'_{p,N})^{-1}$ vanishes because the first-order conditions are satisfied ($g_N(p_N^*) = 0$).
Since the Jacobian of the pseudo-gradient is $\nabla_{p_N} g_N = J_{p,N}$, the Jacobian of the solver dynamics at the equilibrium reduces to $M = -(J'_{p,N})^{-1} J_{p,N}$.

\begin{theorem}[Saddle Point Avoidance] \label{thm:saddle_evasion}
Let $p_N^*$ be a fixed point of the solver dynamics \eqref{eq:solver_dyn} where the first-order necessary conditions are satisfied ($g_N(p_N^*) = 0$), and the reduced game Jacobian $J_{p,N}$ evaluated at $p_N^*$ has at least one eigenvalue with a negative real part.
Assume that $M = -(J'_{p,N})^{-1} J_{p,N}$ 
has at least one unit-length eigenvector $v \in \mathbb{C}^m$ ($\|v\|_2 = 1$) that satisfies
\begin{equation}
-v^* S_N v > \frac{\|A_{N}\|_2^2}{\lambda_{\min}(S'_{N})}.
\end{equation}
Then, $p^*$ is unstable under the solver dynamics~\eqref{eq:solver_dyn}.
\end{theorem}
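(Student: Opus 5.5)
The plan is to show that the linearization $M = -(J'_{p,N})^{-1}J_{p,N}$ has an eigenvalue with strictly positive real part. Lyapunov's indirect method, i.e.\ the linearization principle for the $C^1$ vector field in \eqref{eq:solver_dyn}, then gives instability of $p_N^*$. First I would check that the solver dynamics are well defined near $p_N^*$. Since $S'$ is positive definite (as in the proof of Theorem~\ref{thm:descent}) and $Z$ has full column rank, $S'_N = Z^\top S' Z$ is positive definite. For any nonzero $w$ we have $\mathrm{Re}(w^* J'_{p,N} w) = w^* S'_N w > 0$, so $J'_{p,N}$ is nonsingular. The derivation of $M$ given just before the theorem can therefore be taken as known.

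Next I would turn the eigen-equation into a scalar relation. Let $Mv = \sigma v$ with $\|v\|_2 = 1$. Multiplying by $J'_{p,N}$ gives $-J_{p,N} v = \sigma J'_{p,N} v$, and left-multiplying by $v^*$ gives
\begin{equation*}
-v^* J_{p,N} v = \sigma\, v^* J'_{p,N} v .
\end{equation*}
I then split both sides using $J_{p,N} = S_N + A_N$ and $J'_{p,N} = S'_N + A_N$. Set $a = v^* S_N v \in \mathbb{R}$ and $c = v^* S'_N v \ge \lambda_{\min}(S'_N) > 0$. Because $A_N$ is real skew-symmetric, $v^* A_N v = i\beta$ is purely imaginary with $|\beta| \le \|A_N\|_2$. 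A key point is that the same $i\beta$ appears on both sides, since the surgery leaves $A$ untouched. Hence
\begin{equation*}
\sigma = -\frac{a + i\beta}{c + i\beta}, \qquad \mathrm{Re}\,\sigma = -\frac{ac + \beta^2}{c^2 + \beta^2}.
\end{equation*}

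The main step is the sign of $\mathrm{Re}\,\sigma$. We have $\mathrm{Re}\,\sigma > 0$ if and only if $-ac > \beta^2$, i.e.\ $-a > \beta^2/c$. The hypothesis gives $-a > \|A_N\|_2^2/\lambda_{\min}(S'_N)$. Combining $\beta^2 \le \|A_N\|_2^2$ with $c \ge \lambda_{\min}(S'_N)$ yields $\beta^2/c \le \|A_N\|_2^2/\lambda_{\min}(S'_N) < -a$. So $\sigma$ is an eigenvalue of $M$ in the open right half-plane. The linearization principle then shows that $p_N^*$ is an unstable equilibrium of \eqref{eq:solver_dyn}, and hence $p^* = Zp_N^*$ is unstable.

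The only delicate points are bookkeeping. The first is making sure the quadratic forms are taken with the complex conjugate transpose, so that $v^*S_Nv$ and $v^*S'_Nv$ are real and $v^*A_Nv$ is imaginary. The second is noting that $-v^*S_Nv > 0$ forces $S_N$ to be indefinite. This is consistent with the assumption that $J_{p,N}$ has an eigenvalue with negative real part, but that assumption is not otherwise used.
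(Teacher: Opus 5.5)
Your proposal is correct and is essentially the paper's proof: the same pre-multiplication of $-(S_N+A_N)v=\sigma(S'_N+A_N)v$ by $v^*$, the same formula $\mathrm{Re}\,\sigma=-(ac+\beta^2)/(c^2+\beta^2)$ (your $c,\beta$ are the paper's $b,c$), and the same chain $\beta^2/c\le\|A_N\|_2^2/\lambda_{\min}(S'_N)<-a$. Your extra steps (nonsingularity of $J'_{p,N}$ from its positive-definite symmetric part, and the explicit appeal to Lyapunov's indirect method) make explicit what the paper leaves implicit; like the paper, they rely on $S$ being nonsingular so that $\lambda_{\min}(S'_N)>0$, and your side remark that $-v^*S_Nv>0$ makes $S_N$ ``indefinite'' should instead say ``not positive semidefinite.''
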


\begin{proof}
We will show that, under the assumptions of the theorem, $M$ has at least one eigenvalue with a strictly positive real part.
To this end, let $\lambda \in \mathbb{C}$ be the eigenvalue corresponding to the eigenvector $v$ satisfying the assumptions in the theorem. 

The eigenvalue equation $Mv = \lambda v$ expands to
$$-(S_N + A_N)v = \lambda(S'_N + A_N)v.$$

Pre-multiplying by the conjugate transpose $v^*$ yields
\begin{equation}\label{eq:eigenval}
-v^* S_N v - v^* A_N v = \lambda(v^* S'_N v + v^* A_N v).
\end{equation}
Let $a = v^* S_N v$, $b = v^* S'_N v$, and $v^* A_{N} v = ic$ where $a,c,b \in \mathbb{R}$.
Substituting into (\ref{eq:eigenval}) yields
$-(a + ic) = \lambda(b + ic).$
Solving for the real part of the eigenvalue gives
$$\text{Re}(\lambda) = - \frac{(ab+c^2)}{b^2+c^2}.$$
%
Since $c^2 \le \|A_N\|_2^2$ and $b \ge \lambda_{\min}(S'_N) > 0$ it follows that
$$\frac{c^2}{b} \le \frac{\|A_N\|_2^2}{\lambda_{\min}(S'_N)} < -v^* S_N v = -a.$$
Hence, $ab +c^2 <0$ and thus $\text{Re}(\lambda) > 0$.
It follows that $p^*$ is an unstable equilibrium of the solver dynamics~\eqref{eq:solver_dyn}.
\end{proof}

We note that while the proposed CPS method prevents the solver from converging to a class of saddle points, it cannot eliminate all saddle points.
If the adversarial interactions dominate in a game, the corresponding skew-symmetric component $A_N$ can stabilize a saddle point despite the positive-definite correction applied to the symmetric curvature. 
This is a known result in classical mechanics, where rotational forces corresponding to skew-symmetric components in the dynamics can stabilize an otherwise unstable equilibrium \cite{bloch1994dissipation} and has been formally explored in the context of rotational game dynamics and spurious attractors \cite{balduzzi2018mechanics, mescheder2017numerics}.

\subsection{Filter Line Search}

Solving the modified KKT system yields a search direction. 
To select an appropriate step size, we utilize a filter line search method inspired by modern interior-point solvers~\cite{ipopt, wachter2006implementation, fletcher2002nonlinear}.
First, to maintain the strict positivity of the slack variables $s > 0$ and the dual multipliers $\mu > 0$, we employ a fraction-to-boundary rule~\cite{nocedal2006numerical, wright1997primal}.
The maximum allowable step size $\alpha_{\max}$ is computed as
\begin{equation}\label{eq:ftb}
    \begin{split}
       \alpha_{\max} &= \max_{\alpha \in (0,1]} \alpha, \\
       &s^{(t)} + \alpha\,\Delta s > \epsilon s^{(t)} ,\\
       &\mu^{(t)} + \alpha\,\Delta\mu > \epsilon \mu^{(t)} ,\\
    \end{split}
\end{equation}
where $\epsilon$ is a small positive value to ensure that  $s$ and $\mu$ after the step are still positive.

The fundamental concept of the line search is to view step acceptance as a dual-objective optimization problem that balances feasibility and optimality. 
We evaluate trial points using two distinct $L_1$-norm metrics. 
The primal residual (measuring the infeasibility of \eqref{eq:kkt_dynamics2} and \eqref{eq:kkt_ineq2}) 
is defined as
\begin{equation}
    \theta = \| \mathcal{F} \|_1 + \| h(x, u) + s \|_1.
\end{equation}
The dual residual (measuring optimality and complementarity) is defined as
\begin{equation}
    \phi = \| \nabla_{x} \mathcal{L} \|_1 + \| \nabla_{u} \mathcal{L} \|_1 + \| \mu^\top s - \tau \|_1,
\end{equation}
where $\tau$ corresponds to the current perturbed complementary slackness target.

At each line search iteration, a trial point is checked against two criteria:
whether it improves the optimality of the solution and whether it improves the primal feasibility of the solution.
If either criterion is met, then the trial point is admitted.
Formally, the criterion is
\begin{equation}
\phi_{\text{trial}} < \phi - \gamma \theta \quad \textrm{or}
\quad \theta_{\text{trial}} < (1 - \gamma) \theta,
\end{equation}
where $\gamma \in (0, 1)$ is a tuning parameter. 

Furthermore, to prevent the algorithm from cycling or regressing, we maintain a record of $(\theta, \phi)$ pairs from previously accepted steps. A trial point is rejected if it is dominated by any entry in the filter (i.e., if it fails to improve upon both the feasibility and optimality of a historical point). If a trial step is rejected either by the current iterate or the filter, the step size $\alpha$ is reduced by a backtracking factor $\beta \in (0, 1)$, and the process is repeated. 

If a trial point is accepted solely by improving feasibility at the expense of optimality, the current coordinate pair 
$(\max(c_\theta, \theta), \phi)$ is permanently added to the filter to restrict future iterations from regressing back to this state, where $c_\theta$ is a small positive value.

\subsection{Optimality Checking}

Newton methods solving for the first-order conditions are inherently vulnerable to saddle points and maximizers.
To validate a candidate solution as a true Generalized Nash Equilibrium, we evaluate the second-order sufficient conditions for each player independently.
We define the player-specific KKT vector as $\mathcal{R}^i = [\nabla_{x^i} \mathcal{L}^i, \nabla_{u^i} \mathcal{L}^i, \mathcal{F}^i, \mathcal{H}^i_\textrm{active}]^\top$, where $\mathcal{H}^i_\textrm{active}$ concatenates all active inequality constraints involving $x^i$ and $u^i$.
The KKT matrix for player $i$ is formed by taking the Jacobian of $\mathcal{R}^i$ with respect to the primal and active dual variables, namely, $K^i = \nabla_{(x^i, u^i, \lambda^i, \mu_{i, \textrm{active}})} \mathcal{R}^i$, where $\mu_{i, \textrm{active}}$ denotes the multipliers corresponding to the active constraints.
A sufficient condition for a strict local minimum is that the inertia of this player KKT matrix is $((n+m)T, n_\textrm{active}, 0)$ \cite{gould}—meaning the number of positive eigenvalues equals the number of primal decision variables, and the number of negative eigenvalues equals the number of active constraints. 
Because congruence transformations preserve inertia, this condition is verified efficiently using an $LDL^\top$ factorization.

\subsection{Algorithm}
\begin{algorithm}
\caption{Constrained Potential Surgery (CPS)}
\label{alg:cps}
\begin{algorithmic}
\REQUIRE Initial $x, u$, Initial Homotopy $\tau^{(0)} >0$, Shrink rate $\kappa \in (0,1)$, Tolerance $\epsilon$
\ENSURE Solution $x^*, u^*$, Optimality Flag $\sigma$
\STATE \textbf{Initialize:} $s \leftarrow \max( c_s, -h(x,u) ), \mu \leftarrow \tau^{(0)}/s, \lambda \leftarrow \mathbf{0}$ 
\STATE Filter $\mathcal{F} \leftarrow \{(\theta^{(0)}, -\infty)\}$ 
\REPEAT
    \STATE $\sigma \leftarrow $ True \COMMENT{Optimality Checking}
    \STATE Compute game residual $\mathcal{R}$, Jacobian $\nabla \mathcal{R}$
    \FOR{$i=1$ \TO $N$}
        \STATE Form player $i$'s active KKT matrix $K^i$ from $\nabla \mathcal{R}$
        \IF{$\text{inertia}(K^i) \neq (T(n+m), n_\textrm{active}, 0)$}
            \STATE $\sigma \leftarrow \text{False}$
        \ENDIF
    \ENDFOR
    \STATE Apply inertia corrections $H'_S \leftarrow L |D| L^\top$ to $\nabla \mathcal{R}$
    \STATE Solve modified Newton system for search direction $\Delta$
    \COMMENT{Filter Line Search}
    \STATE Compute $\alpha \leftarrow \alpha_{\max} \leftarrow (\ref{eq:ftb})$, initial metrics $(\theta, \phi)$
    \REPEAT
        \STATE Compute trial metrics $(\theta_{\text{trial}}, \phi_{\text{trial}})$ at $y + \alpha \Delta$
        \IF{trial step dominated by $\mathcal{F}$ \OR no sufficient decrease}
            \STATE $\alpha \leftarrow \beta \alpha$ 
        \ELSE
            \STATE $(x, u, \lambda, \mu, s) \leftarrow (x, u, \lambda, \mu, s) + \alpha \Delta$
            \IF{step accepted solely via feasibility improvement}
                \STATE Add $(\max(c_\theta, \theta), \phi)$ to filter $\mathcal{F}$
            \ENDIF
            \STATE \textbf{break} line search
        \ENDIF
    \UNTIL{step accepted \OR max backtracks reached}
    \STATE $\tau \leftarrow \kappa \tau$ \COMMENT{Shrink perturbed complementary slackness}
\UNTIL{$\|\mathcal{R}\| < \epsilon$}
\RETURN $x, u, \sigma$
\end{algorithmic}
\end{algorithm}

Algorithm~\ref{alg:cps} details the core logic of the proposed method.
The process begins with an explicit optimality check, evaluating the inertia of each player's active KKT matrix to verify local second-order sufficiency. 
The solver then applies the inertia-preserving $LDL^\top$ factorization to correct the symmetric primal block, yielding a modified Newton descent direction designed to evade non-GNE saddle points. 
To guarantee robust convergence from highly infeasible initializations, the algorithm utilizes a filter line search. 
Finally, the perturbed complementarity target $\tau$ is monotonically reduced until the KKT residuals fall below the specified convergence tolerance.

\section{Numerical Simulation}

The performance of the proposed solver is first evaluated in a highway merging problem with multiple cars.
The cars follow the kinematic bicycle model~\cite{kinematic}
where $x^i_k = [p_x, p_y, \psi, v_x, v_y]$, $u^i_k = [a_x, \delta]$, where $a_x$ denotes the longitudinal acceleration, and $\delta$ denotes the steering angle.
The cost function for player $i$ is
\begin{equation}
\begin{split}
    J^i_k(x_k, u^i) = ( C_P \sum_{j\neq i}p_x^j -p_x^i) + C_v(v_x^i - v^i_{\textrm{ref}})^2 + \\
    x^{i\top}_k Qx^i_k + u^{i\top} R u^i,
\end{split}
\end{equation}
which penalizes non-ego players' lead over ego player, deviation from a player-specific target speed, large heading, high lateral velocity, and control effort.
Figure~\ref{fig:merge} shows an illustration of the 8-car merging problem.
\begin{figure}[t!]
  \centering
  \begin{subfigure}{\linewidth}
  \centering
  \includegraphics[clip, trim=300 52 280 52,angle=-90,width=0.95\linewidth]{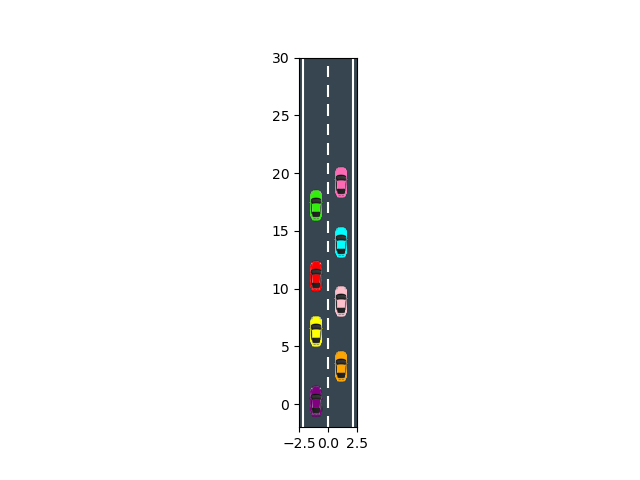}
    \caption{Initial State ($k=0$)}
  \end{subfigure}
  \begin{subfigure}{\linewidth}
  \centering
  \includegraphics[clip, trim=300 52 280 52,angle=-90,width=0.95\linewidth]{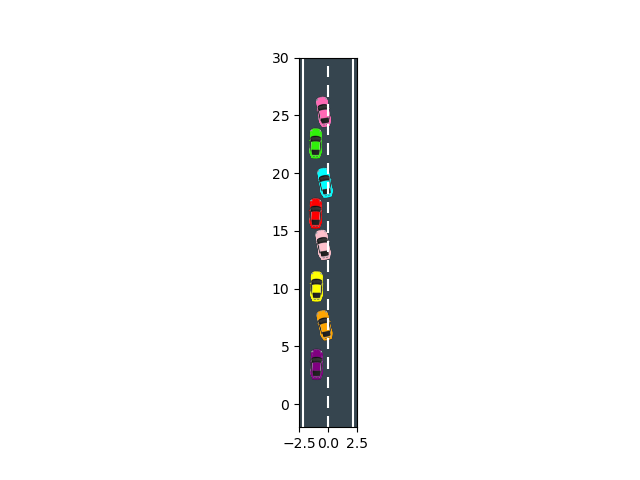}
    \caption{Intermediate State ($k=T/2$)}
  \end{subfigure}
  \begin{subfigure}{\linewidth}
  \centering
  \includegraphics[clip, trim=300 52 280 52,angle=-90,width=0.95\linewidth]{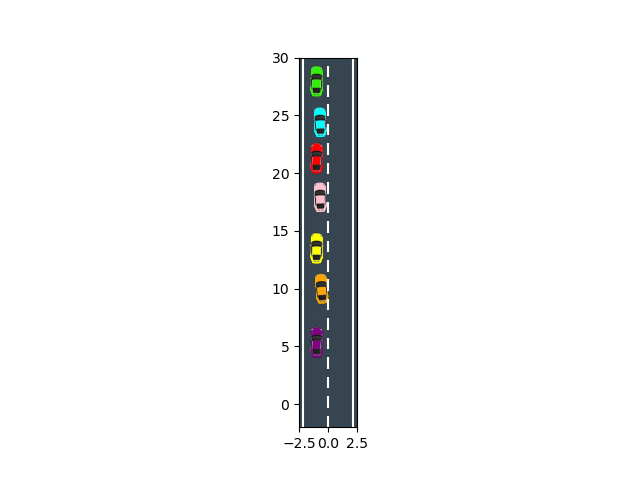}
    \caption{Final State ($k=T$)}
  \end{subfigure}
  \caption{Time-lapse of a solution to the 8-car merging game. Cars merge to the upper lane while maintaining safety distances.}
  \label{fig:merge}
\end{figure}

The initial state of the cars and the target speed for each car are randomly generated.
The horizon is set at 20 steps.
Figure~\ref{fig:runtime} compares the average runtime of the baseline RD3G algorithm, the proposed interior point method (IPM), and the proposed method with Constrained Potential Surgery (IPM+PS) for two to eight total cars over 100 scenarios.
All solvers used a stopping tolerance of $10^{-4}$ with 30 iterations.
As shown in Figure~\ref{fig:runtime}, the proposed method's runtime advantage over the baseline~\cite{rd3g}, which is reportedly faster than iLQGame~\cite{ilqgame} and ALGames~\cite{algames}, is significant.

\begin{figure}
    \centering
    \includegraphics[width=\linewidth]{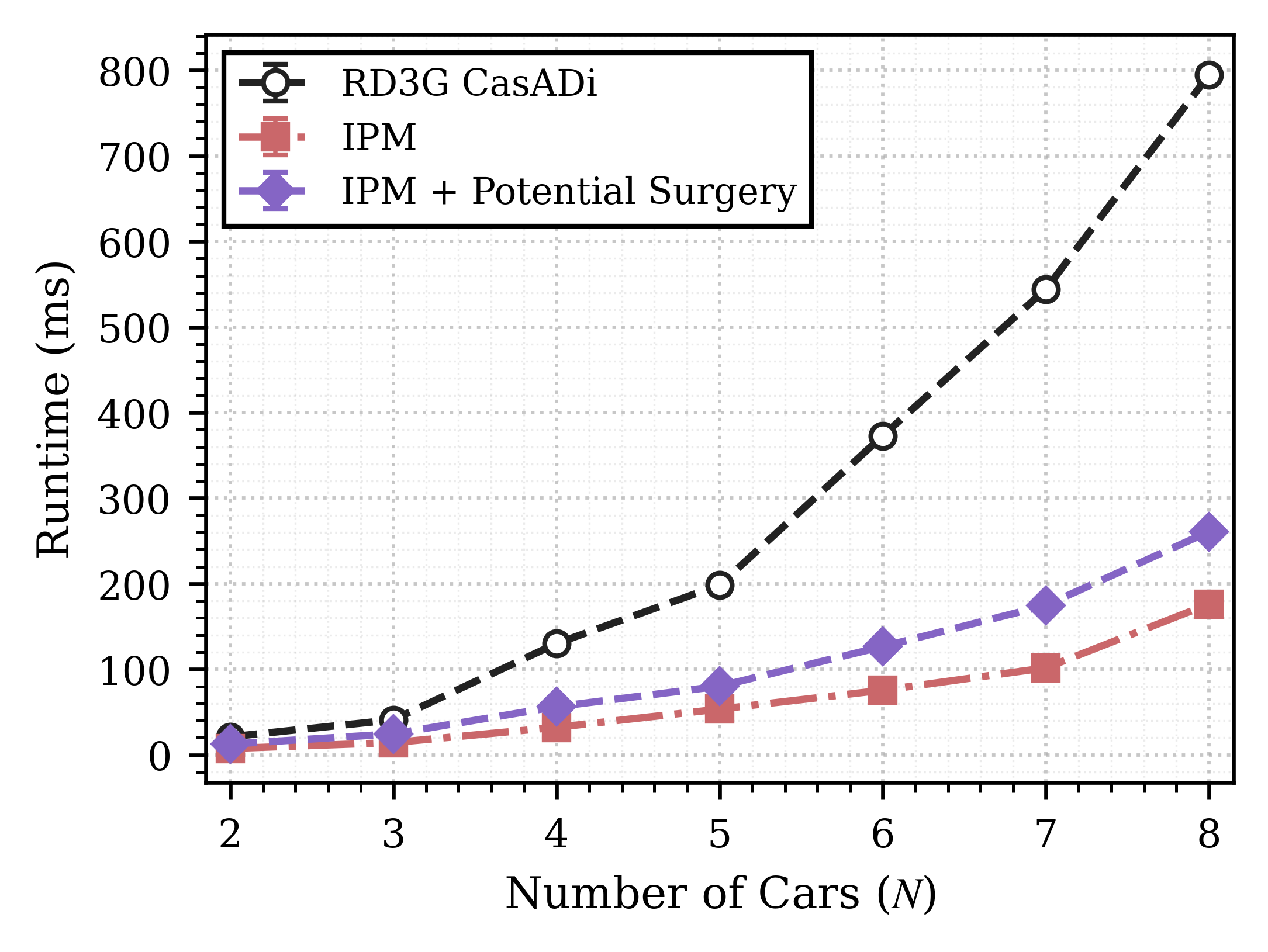}
    \caption{Average runtime comparison of baseline and proposed algorithms.}
    \label{fig:runtime}
\end{figure}

\color{black}
Figures \ref{fig:convergence} and \ref{fig:optimality} illustrate the convergence and optimality rates of the proposed algorithms.
Convergence here means that the solver reached a solution where the first-order conditions are met, with KKT residuals below the specified tolerance. 
Optimality means that the control actions for all agents are at least local minima of their respective objective functions, as verified by checking the second-order conditions for each agent.
As expected, both metrics decrease as the number of agents increases, reflecting the growing complexity of the game and the exponential proliferation of non-Nash saddle points.
Because Newton-type solvers rely on local gradient and curvature information, their convergence is inherently restricted to the local region of attraction of the initial guess~\cite{nocedal2006numerical}. 
In tightly constrained, multi-agent scenarios, providing an initial guess within this region is difficult, making guaranteed convergence difficult without random restarts.

Notably, while the proposed potential surgery approach incurs a slight reduction in the overall convergence rate, it yields a significant increase in the optimality ratio. 
This trade-off is deliberate: the surgery actively destabilizes non-Nash attractors, forcing the solver to seek true local GNEs. 
Consequently, cases that would have previously converged to poor, non-Nash saddle points are either successfully redirected to optimal solutions or gracefully fail to converge. 
Ultimately, this degradation in raw convergence is greatly outweighed by the improvement in solution quality, yielding a higher absolute number of optimal, dynamically viable trajectories, with a small runtime penalty.
\color{black}

All benchmarks were completed on a desktop computer equipped with an i7-7700K CPU @ 4.20GHz and 32GB RAM.

\begin{figure}
    \centering
    \includegraphics[width=\linewidth]{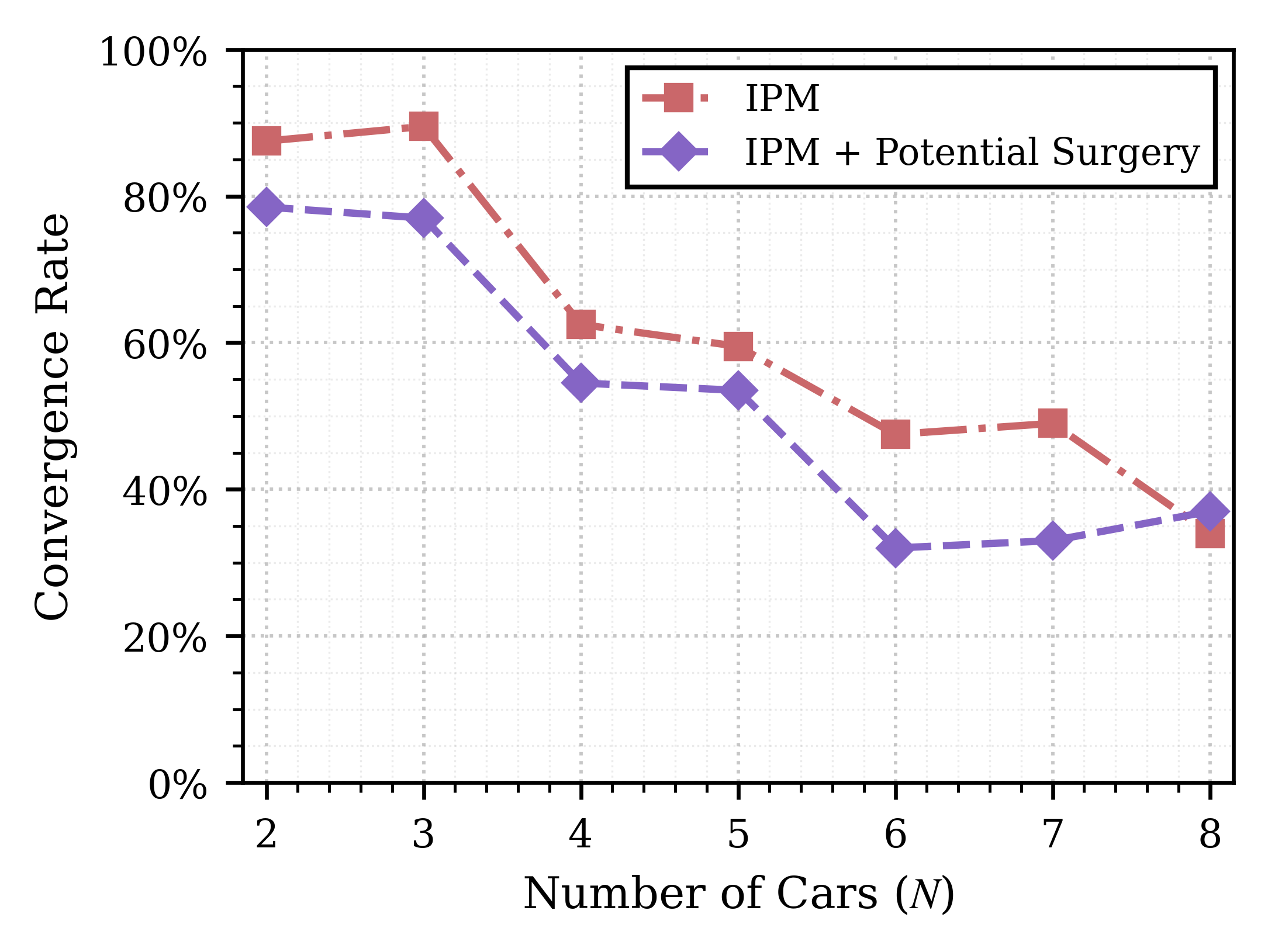}
    \caption{Convergence ratio of proposed algorithms.}
    \label{fig:convergence}
\end{figure}
\begin{figure}
    \centering
    \includegraphics[width=\linewidth]{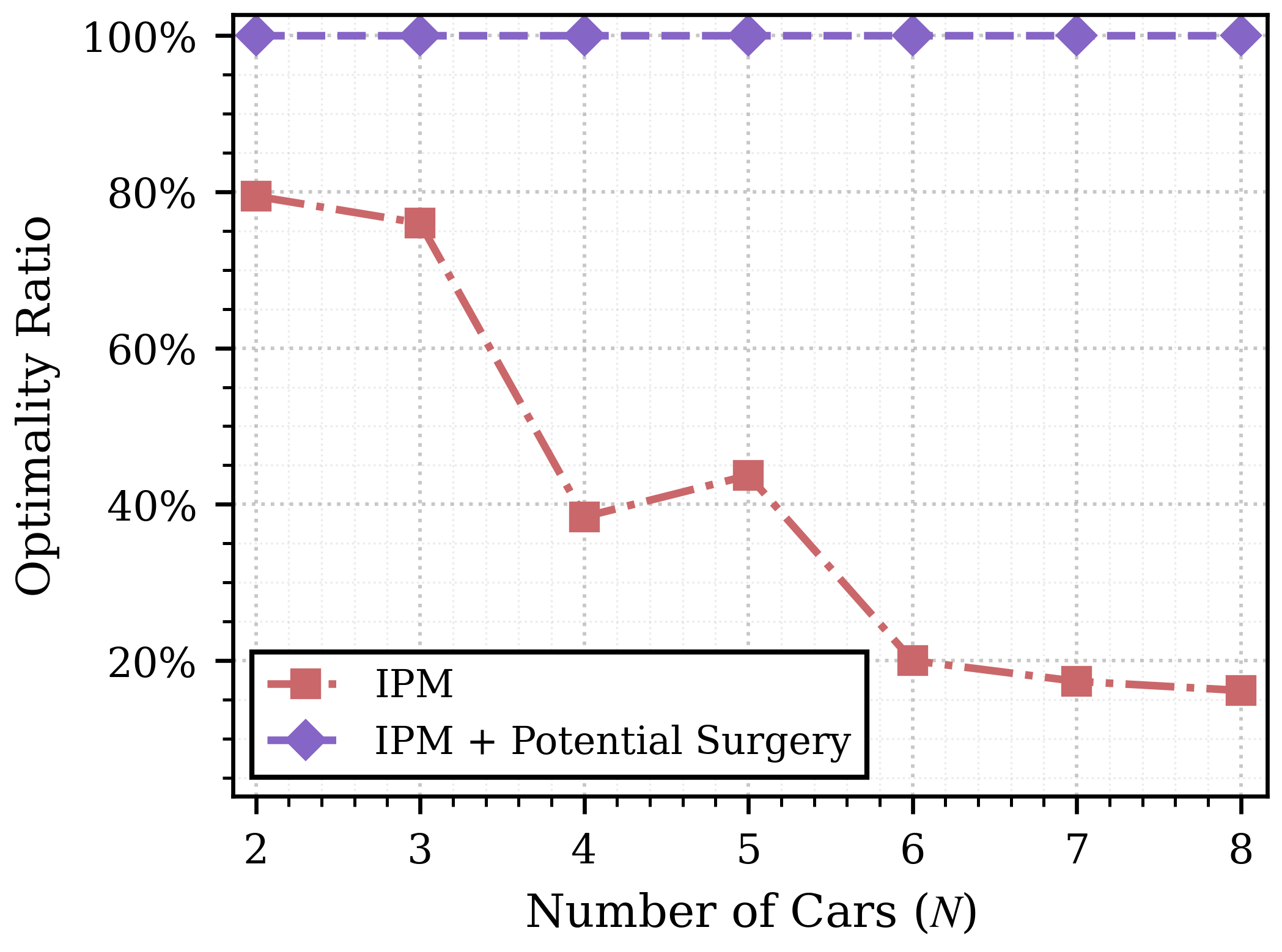}
    \caption{Optimality ratio of the converged simulations.}
    \label{fig:optimality}
\end{figure}

\section{EXPERIMENTAL EVALUATION}
\label{sec:experiment}

To validate the real-world applicability and real-time performance of the proposed solver, we conducted a physical experiment involving a multi-agent autonomous racing scenario.
The experiment utilizes BuzzRacer~\cite{buzzracer}, a miniature autonomous racing platform designed for testing adversarial decision-making.

The physical system is modeled using a kinematic bicycle model formulated within the Frenet frame~\cite{kinematic}.
The objective function for each agent is designed to encourage competitive interaction, incorporating rewards for maintaining a lead over opponents and penalties for control effort, heading deviation, lateral deviation from the reference centerline, and longitudinal deviation from the reference speed.
To induce overtaking and blocking behaviors, heterogeneous reference speeds were assigned to the vehicles.
Safety and physical limits were enforced via inequality constraints.
 Track boundaries were modeled as state constraints, and inter-agent collision avoidance was handled by approximating each vehicle's footprint as two tangent circles.
 A collision constraint is violated if any circle of one agent intersects with a circle of another agent.

The software architecture employs a hierarchical planning and control pipeline.
 At the high level, a CPS-based planner solves for the GNE trajectories.
 To mitigate the risk of convergence failure and topological entrapment, the planner concurrently runs two CPS solvers initialized with distinct trajectories: 
 a zero-control guess and a simulated, obstacle-ignorant Stanley controller~\cite{stanley} rollout. 
 While warm-starting from a prior iteration is common in single-agent trajectory optimization, it is highly prone to poor local optima in tightly constrained, multi-agent dynamic games.
 For example, a warm-started solver may continuously attempt to close the gap between adversarial vehicles rather than discovering a topologically distinct path to circumvent them.
 Concurrent initialization using varied heuristic guesses effectively circumvents this issue, ensuring the real-time execution of the safest and most optimal maneuver.
 This top-level planner achieves an average solution time of 120 ms.
 %
%
At the low level, the trajectory generated by the CPS planner is tracked by a Stanley controller operating at 100 Hz.
Figure \ref{fig:experiment_snapshots} illustrates a sequence of snapshots demonstrating a successful three-car overtaking maneuver executed during the physical trials.
 The accompanying supplementary video provides a comprehensive view of the multi-agent racing behaviors.

\begin{figure}[thpb]
  \centering
  \begin{subfigure}[b]{0.3\linewidth}
  \includegraphics[width=0.95\linewidth]{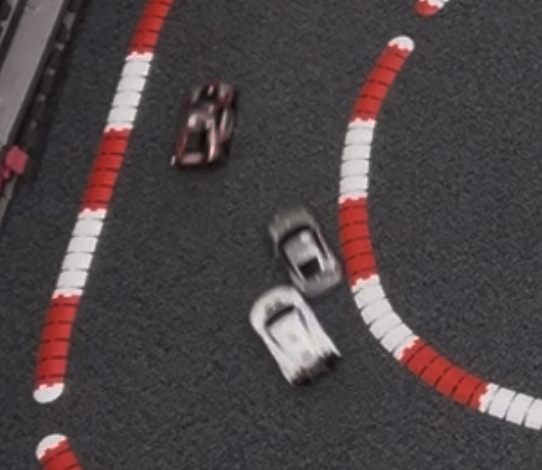}
  \caption{}
  \end{subfigure}
  \begin{subfigure}[b]{0.3\linewidth}
  \includegraphics[width=0.95\linewidth]{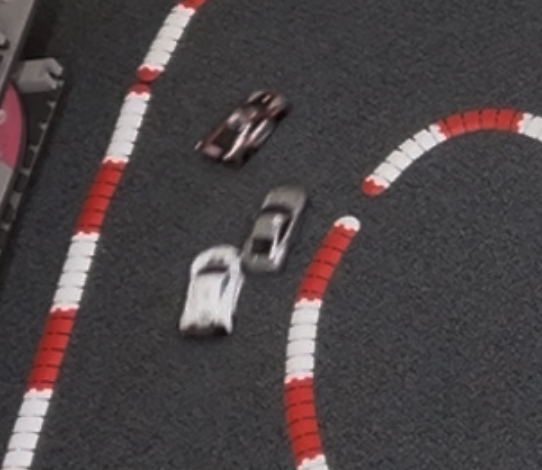}
  \caption{}
  \end{subfigure}
  \begin{subfigure}[b]{0.3\linewidth}
  \includegraphics[width=0.95\linewidth]{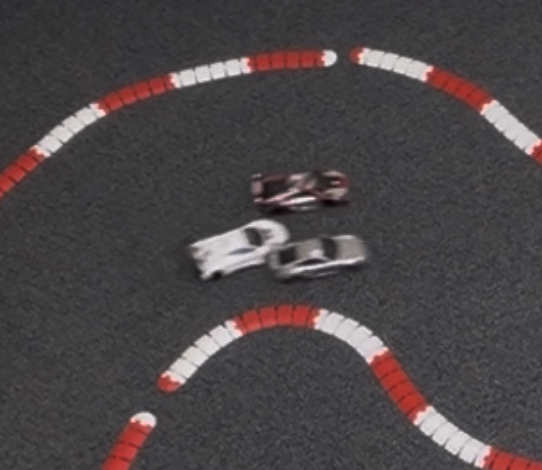}
  \caption{}
  \end{subfigure}
  \begin{subfigure}[b]{0.3\linewidth}
  \includegraphics[width=0.95\linewidth]{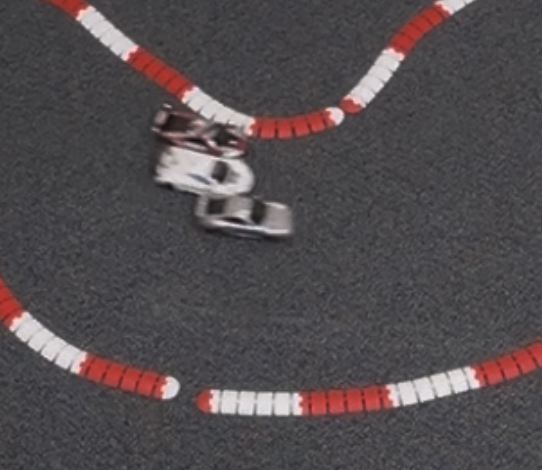}
  \caption{}
  \end{subfigure}
  \begin{subfigure}[b]{0.3\linewidth}
  \includegraphics[width=0.95\linewidth]{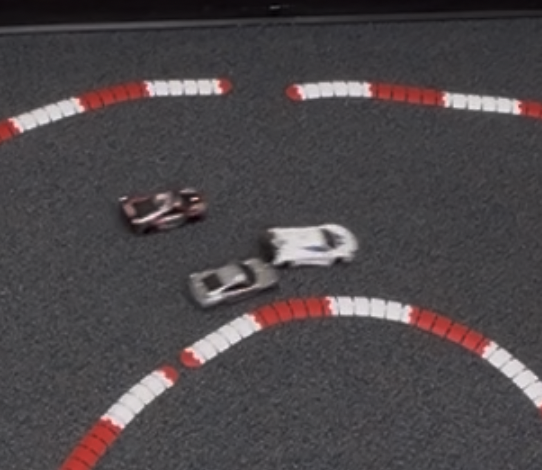}
  \caption{}
  \end{subfigure}
  \begin{subfigure}[b]{0.3\linewidth}
  \includegraphics[width=0.95\linewidth]{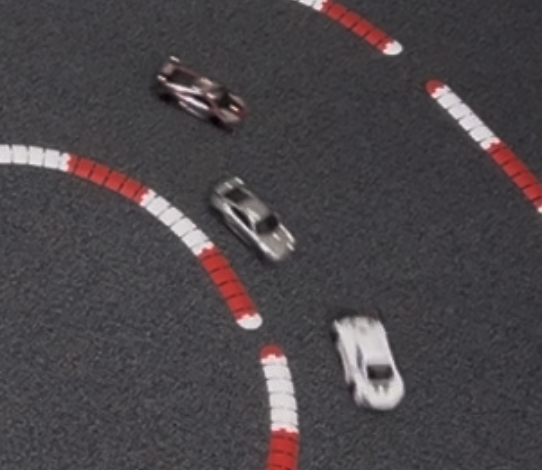}
  \caption{}
  \end{subfigure}
  \caption{Snapshots of a three-car overtake during the experiment.}
  \label{fig:experiment_snapshots}
\end{figure}

\section{CONCLUSION}
\label{sec:conclusion}

In this work, we presented Constrained Potential Surgery (CPS), a highly efficient interior-point solver tailored for constrained general-sum dynamic games.
 By explicitly correcting the symmetric part of the game Hessian via an efficient $LDL^{\top}$ factorization, the proposed method actively steers the descent direction away from non-Nash saddle points.
 This structural intervention significantly improves the optimality rate of the computed solutions with minimal computational cost.
Numerical benchmarks in a highly interactive highway merging scenario demonstrate that CPS scales effectively, maintaining a distinct runtime advantage over existing state-of-the-art Newton and active-set solvers while achieving superior convergence to true Generalized Nash Equilibria.
 Furthermore, successful deployment on a fleet of miniature autonomous racing vehicles confirms that the algorithm is robust to infeasible initializations and sufficiently fast for real-time, receding-horizon control in complex, multi-agent environments.
 Future work will explore distributed implementations of the solver and further theoretical characterizations of the modified descent dynamics in asymmetric games.

\balance
\bibliographystyle{ieeetran}
\bibliography{refs}

\end{document}